\newtheorem{proposition}{Proposition}
\begin{document}

\title{\LARGE \bf
Kinematic Analysis and Design of a Novel (6+3)-DoF Parallel Robot with Fixed Actuators
}

\author{Arda Yi\u{g}it, David Breton, Zhou Zhou, Thierry Lalibert\'e and Cl\'ement Gosselin %
\thanks{This work was supported by the Natural Sciences and Engineering Research Council of Canada (NSERC). }%
\thanks{The authors are with the Department of Mechanical Engineering, Universit\'e Laval, Qu\'ebec, Qc, Canada. Emails: {\tt\small arda.yigit.1@ulaval.ca}, {\tt\small david.breton.2@ulaval.ca}, {\tt\small zhou.zhou.2@ulaval.ca}, {\tt\small thierry@gmc.ulaval.ca}, {\tt\small gosselin@gmc.ulaval.ca}. }%
}

\maketitle
\thispagestyle{empty}
\pagestyle{empty}

\begin{abstract}
A novel kinematically redundant (6+3)-DoF parallel robot is presented in this paper. Three identical 3-DoF RU/2-RUS legs are attached to a configurable platform through spherical joints. With the selected leg mechanism, the motors are mounted at the base, reducing the reflected inertia. The robot is intended to be actuated with direct-drive motors in order to perform intuitive physical human-robot interaction. The design of the leg mechanism maximizes the workspace in which the end-effector of the leg can have a 2g acceleration in all directions. All singularities of the leg mechanism are identified under a simplifying assumption. A CAD model of the (6+3)-DoF robot is presented in order to illustrate the preliminary design of the robot. 
\end{abstract}

\section{Introduction}

Parallel robotic architectures allow motors to be fixed at the base of the robots. The moving mass can therefore be small, and the robot can be actuated by so-called "direct-drive" motors. These motors do not use a gearbox, thereby yielding a better efficiency, reducing friction and greatly reducing the reflected inertia. They allow for developing backdrivable robots which pave the way for applications such as sensorless physical human-robot interaction. Indeed, lightweight backdrivable parallel mechanisms actuated by direct-drive motors enable the control of the force at the end-effector without being limited by the bandwidth of force sensors.

On the other hand, parallel robots often suffer from mechanical interference and singularities in their reachable workspace, which restricts their orientation capabilities. For example, the tilt angle of a Gough-Stewart platform is typically limited to approximately 45 degrees \cite{Gosselin2018}. This limitation can be circumvented by introducing redundancy, and more precisely actuation or kinematic redundancy \cite{Notash2003}. Actuation redundancy corresponds to using more actuators than the degree of mobility of the mechanism. It is often implemented by adding extra legs to the mechanism \cite{HuiCheng2003} or by actuating passive joints of an existing architecture \cite{Firmani2004}. Actuation redundancy can be useful to avoid singularities \cite{Kurtz1992} or to improve the stiffness of the robot \cite{Kim1997}. Nevertheless, redundant actuation generates internal antagonistic forces in the mechanism, which makes the control of the robot more complex and may require load cells \cite{Harada2013} or force control algorithms \cite{Harada2010}. A parallel robot is said to be kinematically redundant if its degree of mobility is greater than the number of degrees of freedom of the end-effector. Kinematic redundancy is often obtained by introducing additional actuated joints in one or more legs of the parallel mechanism. Therefore, a desired pose of the end-effector corresponds to infinitely many joint configurations. Kinematic redundancy allows for avoiding singularities \cite{Zanganeh1994}, enlarging the workspace (especially in orientation) and using configurable platforms as end-effector platform \cite{Lambert2010}. 

In recent years, several kinematically redundant six-degree-of-freedom (6-DoF) mechanisms have emerged \cite{Gosselin2016,Gosselin2018}. 
Schreiber and Gosselin introduced a (6+3)-DoF parallel robot with six legs, similar to a Gough-Stewart platform \cite{Schreiber2019a}. Three of the six legs of the Gough-Stewart platform are replaced by redundant legs comprising two sub-legs, each containing a prismatic joint. A revolute joint links the two sub-legs and is connected to the moving platform through an extra link and a spherical joint. This new architecture avoids singularities and extends the workspace of the mechanism. 
Wen et al. proposed a backdrivable, three-legged, (6+3)-DoF hybrid parallel robot for sensorless physical human-robot interaction \cite{Wen2021}. Using only three legs reduces interference. The robot is designed to avoid any type II (or parallel) singularity within its workspace, yielding a large translational and orientational workspace. Each leg is actuated by one fixed motor and two mobile motors positioned close to the base and driving a planar five-bar mechanism. In order to enable intuitive physical human-robot interaction with this architecture, the mobile motors must be lightweight, which limits the dynamic capabilities of the robot. 

In this work, we propose a new (6+3)-DoF parallel robot, in which the hybrid parallel leg proposed by Wen et al. \cite{Wen2021} is replaced by a parallel one. As described in Section \ref{sec:robot_architecture}, each leg is actuated by three collinear revolute joints and, therefore, the workspace of the leg has a circular symmetry. The inverse kinematics of a leg are derived and the Jacobian matrices are obtained using screw theory (Section \ref{sec:kinematic_modeling}). The singularities of the leg mechanism are discussed and a simplifying assumption on robot geometry is used to provide a simple condition for type II singularities (Section \ref{sec:singularity_analysis}). The circular symmetry of the leg mechanism is exploited to find the optimal geometric parameters maximizing the workspace in which the end-effector of the leg can have a $2g$ acceleration in any direction (Section \ref{sec:optimal_design}). Finally, a three-dimensional model of the robot obtained with a computer-aided design (CAD) software and the selected hardware is shown in Section \ref{sec:prototyping} in order to present a preliminary design.

\section{Robot Architecture} \label{sec:robot_architecture}

The architecture of the robot is illustrated in Fig. \ref{fig:robot_architecture}. The robot includes three identical 3-DoF \underline{R}U/2-\underline{R}US legs (one \underline{R}U chain and two \underline{R}US chains), shown in Fig. \ref{fig:leg_parametrization}. Here, R represents a revolute joint, U a universal joint and S a spherical joint. The actuated joints are underlined. The axes of the actuated R joints in each leg are collinear, resulting in a large workspace with a circular symmetry. All motors are mounted at the base, which allows for a low inertia of the mobile parts and consequently favours intuitive and safe physical human-robot interaction. 

\begin{figure}[htbp]
    \centering
    \includegraphics[width=0.45\textwidth]{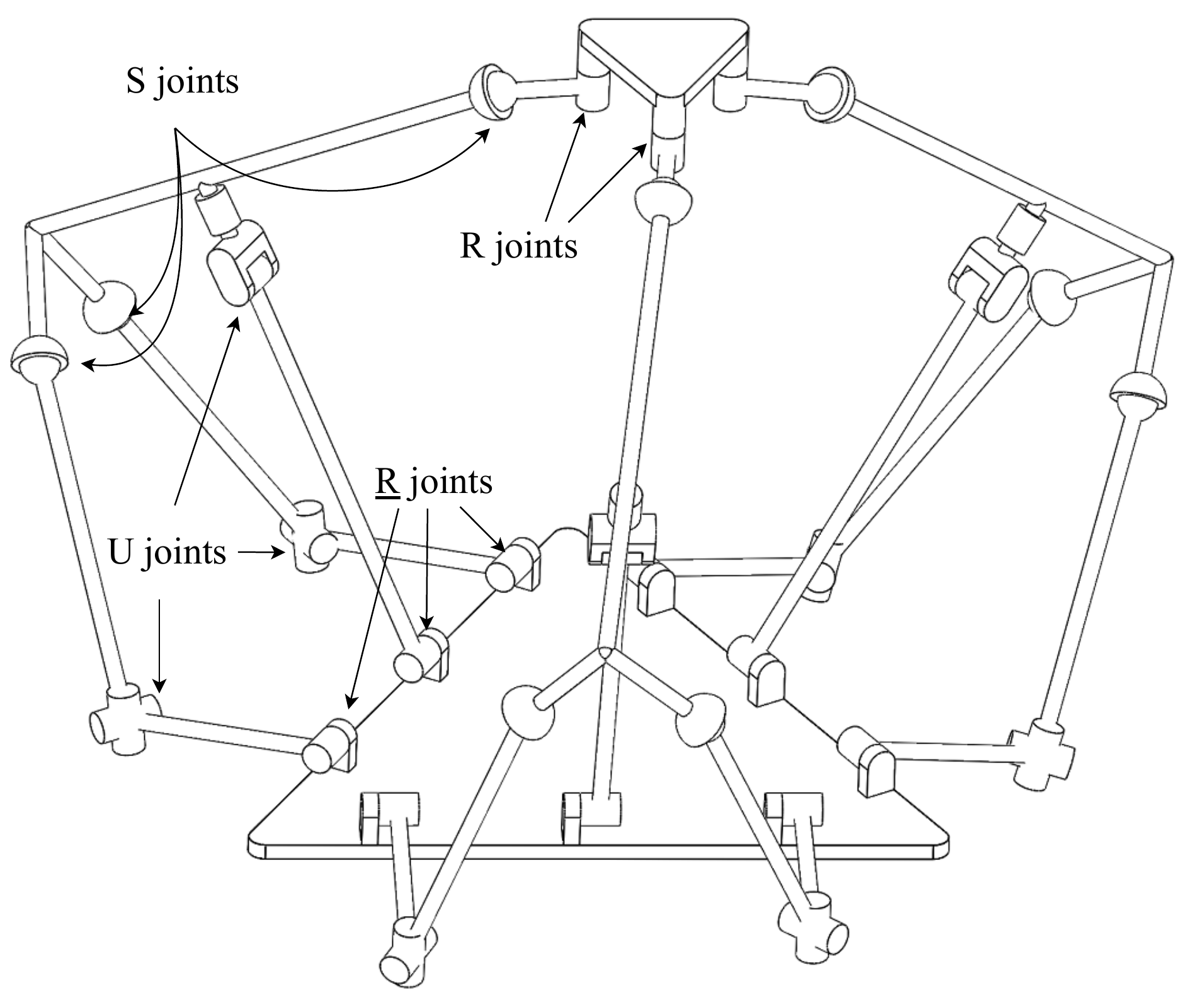}
    \caption{Parallel robot architecture. }
    \label{fig:robot_architecture}
\end{figure}

\begin{figure}[htbp]
    \centering
    \includegraphics[width=0.45\textwidth]{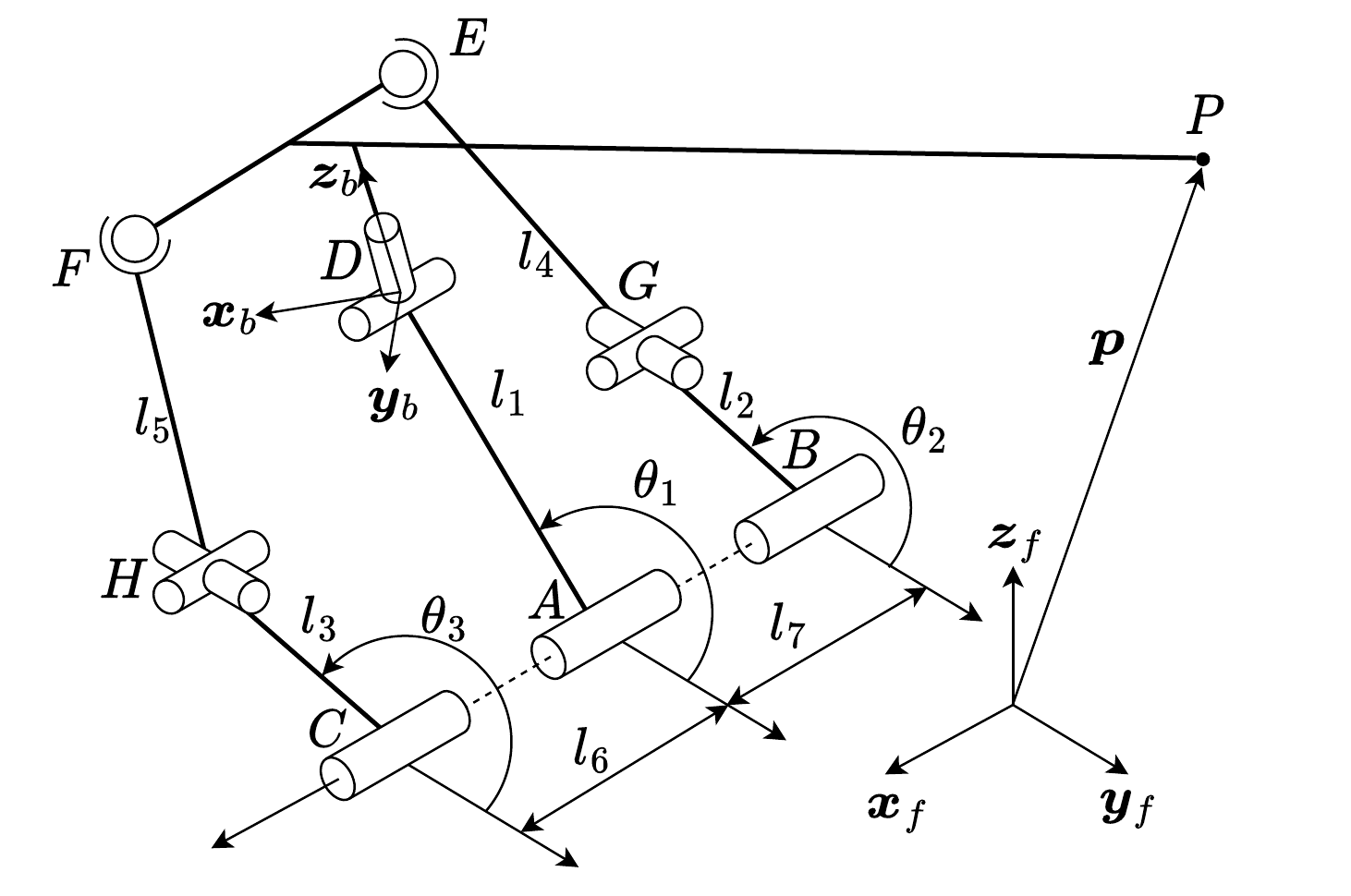}
    \caption{Parameterization of the leg mechanism. }
    \label{fig:leg_parametrization}
\end{figure}

The \underline{R}U chain constrains the motion of point $P$ to a sphere of radius $d_p$ around point $D$, where $d_p$ stands for the constant distance between points $D$ and $P$. 
Combining the circular symmetry of the leg and the constraint on point $P$, and assuming that the geometric parameters of the \underline{R}US chains are chosen appropriately, the workspace of the leg corresponds to a torus. 
The tube of the torus has a radius $d_p$. The distance from the centre of the tube to the centre of the torus is $l_1$, the length of the proximal link of the $\underline{R}U$ chain. 
It should be noted that, if $l_1 < d_p$, then the workspace corresponds to a spindle torus, which is not simply connected (i.e. has a "hole" inside). Indeed, point $A$ and its neighbourhood are not reachable. 

The constant orientation workspace (COW) is the set of positions that can be reached by the end-effector while keeping a constant orientation \cite{Merlet1998}. The COW of the kinematic chain of the robot (not restricted by constraints that depend on the physical implementation) is obtained by taking the intersection of the workspace of each leg, with an offset depending on the mobile platform. An example for the case $d_p < l_1$, generated with a CAD software, is illustrated in Fig. \ref{fig:ws_cad}. Figure \ref{fig:ws_cad_legs} shows the kinematic workspace of each leg and their intersection is shown in Fig. \ref{fig:ws_cad_intersection}. Only the upper half is shown since the lower half cannot be used in practice due to interference with the base. 

\begin{figure}[htbp]
    \centering
    \subfloat[\label{fig:ws_cad_legs}Kinematic workspace of the legs of the $(6+3)$-DoF robot.]{%
        \includegraphics[trim=400px 100px 350px 50px,clip=true,width=0.49\linewidth]{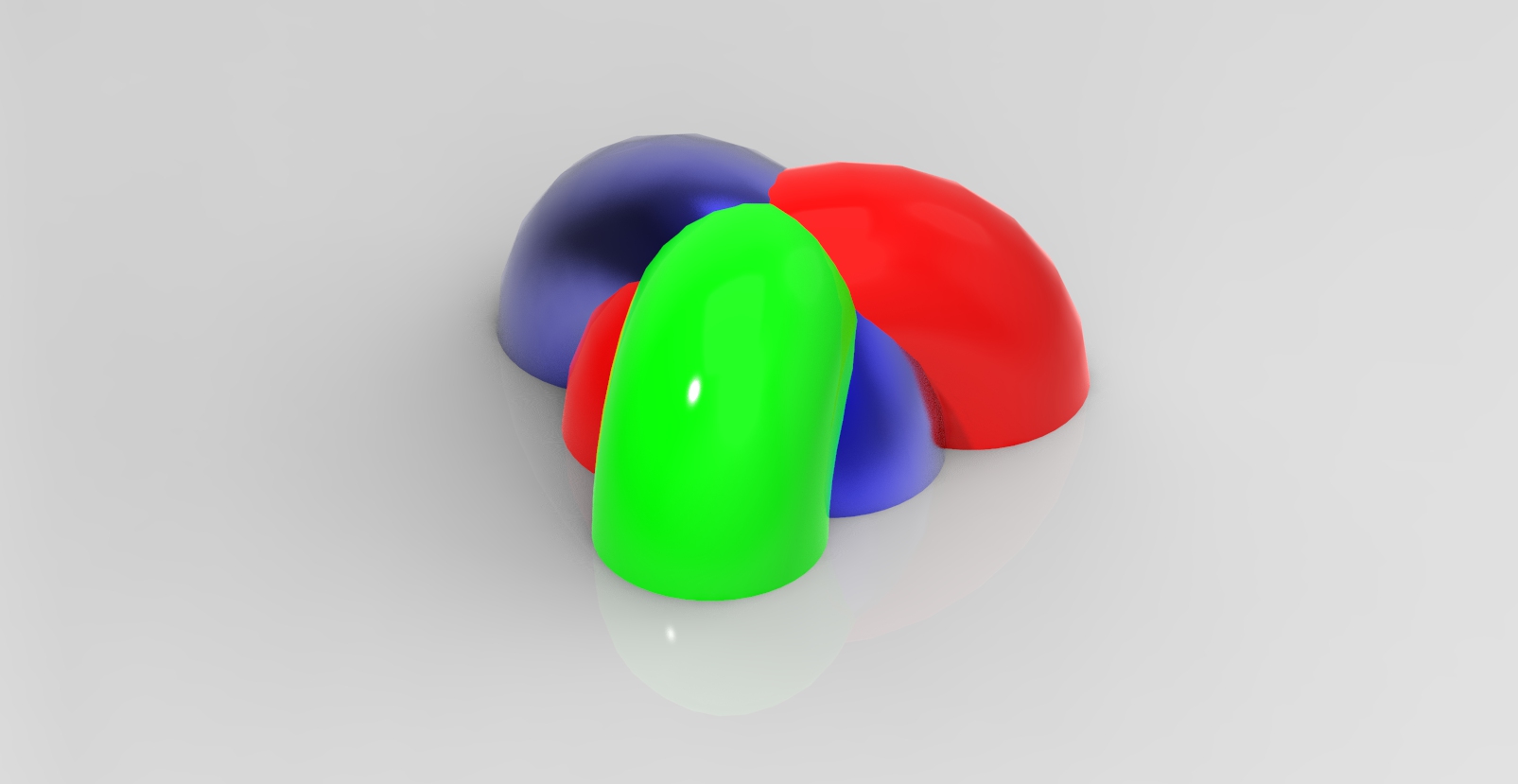}}
    \hfill
    \subfloat[\label{fig:ws_cad_intersection}Intersection of the kinematic workspaces of the legs.]{%
        \includegraphics[trim=400px 100px 350px 50px,clip=true,width=0.49\linewidth]{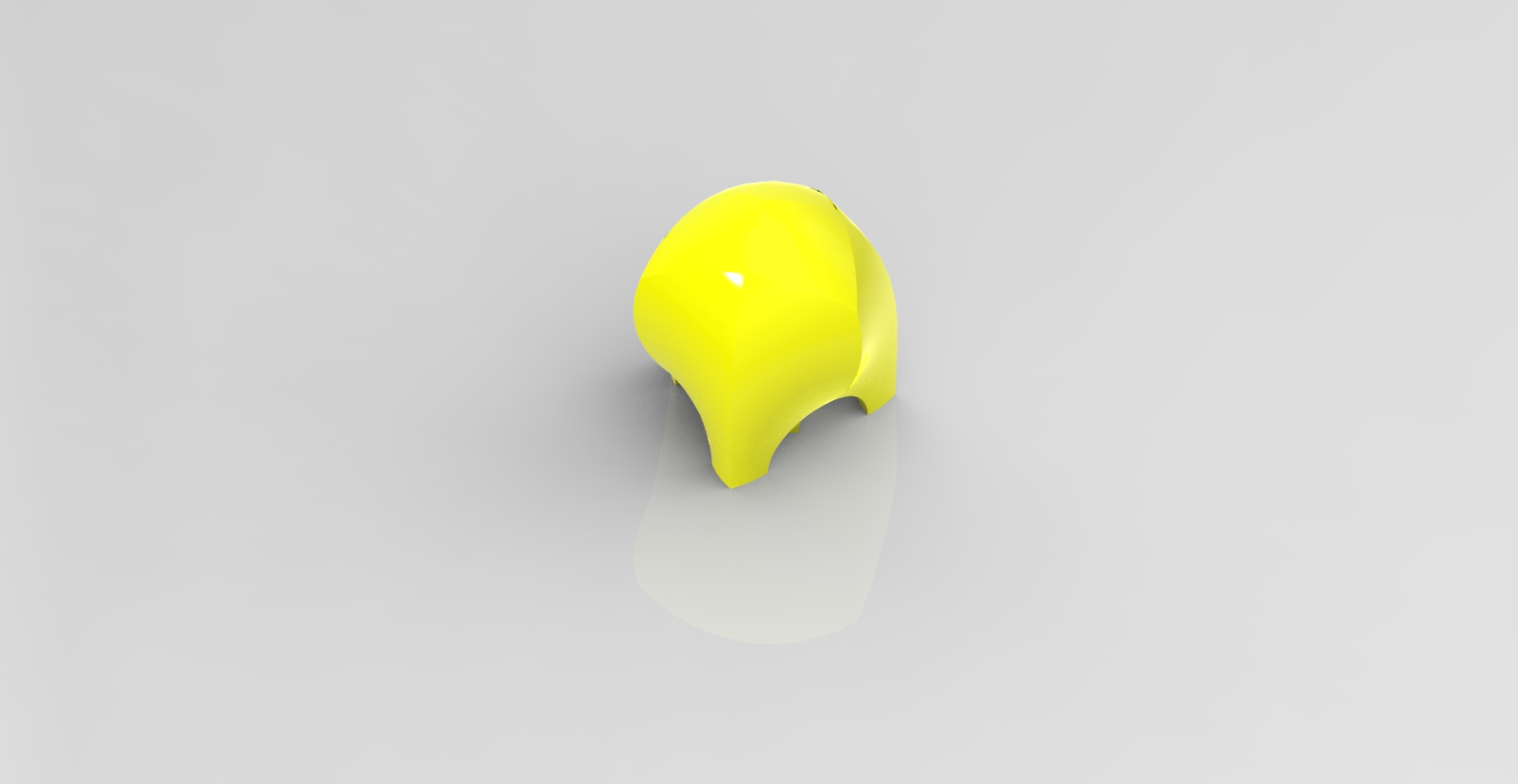}}
    \caption{Constant orientation workspace of the $(6+3)$-DoF robot determined as the intersection of the kinematic workspaces of the legs.}
    \label{fig:ws_cad}
\end{figure}

The mobile platform is the same as the one proposed by Wen et al. \cite{Wen2021}. It is equipped with three revolute joints having parallel axes and driving links that are in turn connected to the spherical joints at the end-effector of each leg. Therefore, the robot has nine DoFs and nine actuators. There is no actuation redundancy that may lead to antagonistic internal forces. It should also be noted that other configurable platform architectures using prismatic joints are also possible \cite{Wen2019,Nguyen2021}.

\section{Kinematic Modelling of the Leg Mechanism} \label{sec:kinematic_modeling}

\subsection{Parameterization}

The geometric parameters of the robot are shown in Fig. \ref{fig:leg_parametrization}. The index referring to the leg number is omitted (except otherwise stated) to simplify the notations. 

The base of the robot is fixed with respect to an inertial reference frame $(O,\vv{x}_f,\vv{y}_f,\vv{z}_f)$ whose $\vv{x}_f$ axis is parallel to the axes of the actuated joints. For any point represented with an uppercase letter, its position vector in the base frame is written in bold lowercase (e.g., the position vector of point $P$ is noted $\vv{p}$). 
A moving body frame $(D,\vv{x}_b,\vv{y}_b,\vv{z}_b)$ is attached to the end-effector platform. Its $\vv{x}_b$ axis is aligned with the line $\vv{p}-\vv{d}$ and its $\vv{z}_b$ axis aligned with one of the axes of the U joint at $D$ as shown in Fig. \ref{fig:leg_parametrization}. The coordinates of $P$, $E$ and $F$ in the body frame are respectively written as $\vv{p}_b=[ d_p \; 0 \; 0 ]^T$, $\vv{e}_b=[ e_x \; e_y \; e_z ]^T$ and $\vv{f}_b=[ f_x \; f_y \; f_z ]^T$. The rotation matrix $\vv{Q}$ describes the orientation of the body frame with respect to the base frame. Parameters denoted as $l_j$ correspond to the length of the links. Actuated joint angles are $\vv{\theta} = [ \theta_1 \; \theta_2 \; \theta_3 ]^T$. Point $P$ is considered the end-effector of the leg and its position vector is written as $\vv{p} = [ x \; y \; z ]^T$. For an angle $\theta_j$, its cosine and sine are written $c_j=\cos \theta_j$ and $s_j=\sin \theta_j$. Unless otherwise stated, all vectors are expressed in the base frame.

\subsection{Inverse Kinematics}

Zhou and Gosselin proposed a solution of the inverse kinematics problem for a similar leg mechanism in a previous work \cite{Zhou2022}. The solution is adapted here to the selected geometry, with collinear actuators. 
For reasons of brevity, only the main steps of the resolution are discussed in the following. Only the interior of the reachable workspace for the solution of the inverse kinematics is considered here to avoid situations in which one of the equations may degenerate, namely type I singularities \cite{Gosselin1990}. 

\subsubsection{RU Chain}

Let $\vv{R}_{\vv{u},\psi}$ be the rotation matrix of axis $\vv{u}$ and angle $\psi$ and $(\vv{x},\vv{y},\vv{z})$ the canonical basis of $\mathbb{R}^3$. The rotation matrix $\vv{Q}$ can be obtained using the joint variables $\vv{\theta}_s = [ \theta_1 \; \theta_{s2} \; \theta_{s3} ]^T$ of the RU chain, where $\theta_{s2}$ and $\theta_{s3}$ are the joint angles associated with the two R joints of the U joint. One has
\begin{equation}
    \vv{Q} = 
    \vv{R}_{(\vv{x},\theta_1)}
    \vv{R}_{(\vv{x},\theta_{s2})}
    \vv{R}_{(\vv{z},\theta_{s3}-\frac{\pi}{2})}
\end{equation}
or, more explicitly: 
\begin{equation}
    \vv{Q} = 
    \begin{bmatrix}
        s_{s3} & c_{s3} & 0 \\
        -c_{1+s2}c_{s3} & c_{1+s2}s_{s3} & -s_{1+s2} \\
        -s_{1+s2}c_{s3} & s_{1+s2}s_{s3} & c_{1+s2}
    \end{bmatrix}
\end{equation}
where $c_{1+s2}=\cos(\theta_1+\theta_{s2})$ and $s_{1+s2}=\sin(\theta_1+\theta_{s2})$. 

Expanding the constraint
\begin{equation} \label{eq:constraint_RU_pd}
    (\vv{p}-\vv{d})^T(\vv{p}-\vv{d}) = {\vv{p}_b}^T\vv{p}_b
\end{equation}
yields a trigonometric equation featuring $\cos \theta_1$ and $\sin \theta_1$. Two possible values for $\theta_1$ are the solutions of the polynomial equation obtained using the tangent half-angle formulae. 

The joint angles $\theta_{s2}$ and $\theta_{s3}$ of the RU chain are needed in order to obtain the expression of the rotation matrix $\vv{Q}$ from the end-effector position $\vv{p}$. 

The relation
\begin{equation}
    \vv{Q}\vv{p}_b = \vv{p} - \vv{d}
\end{equation}
yields a system of trigonometric equations featuring $\cos \theta_{s3}$, $\sin \theta_{s3}$, $\cos(\theta_1+\theta_{s2})$ and $\sin(\theta_1+\theta_{s2})$. Since $\theta_1$ is known, the value of $\theta_{s2}$ is obtained using the $\atantwo$ function and the two possible values of $\theta_{s3}$ are obtained using the tangent half-angle formulae as previously. 

\subsubsection{RUS Chains}

In the same vein as \eqref{eq:constraint_RU_pd}, expanding the constraint equations \begin{equation}
    \begin{cases}
        (\vv{e}-\vv{g})^T(\vv{e}-\vv{g}) = {l_4}^2 \\
        (\vv{f}-\vv{h})^T(\vv{f}-\vv{h}) = {l_5}^2
    \end{cases}
\end{equation}
yields two trigonometric equations. Solving them with the help of the tangent half-angle formulae results in two possible values for both $\theta_2$ and $\theta_3$.

\subsection{Jacobian Matrices}

The Jacobian matrices are obtained using screw theory. 

Let $\vv{\xi}_{EE}$ be the end-effector twist and $\vv{\xi}_J$ the twist associated with the joint located at point $J$. One can then write
\begin{equation} \label{eq:twist_ee}
    \begin{cases}
        \vv{\xi}_{EE} = \vv{\xi}_A + \vv{\xi}_D \\
        \vv{\xi}_{EE} = \vv{\xi}_B + \vv{\xi}_G + \vv{\xi}_E \\
        \vv{\xi}_{EE} = \vv{\xi}_C + \vv{\xi}_H + \vv{\xi}_F
    \end{cases}
\end{equation}

Let us define the reciprocal product $\cdot$ of twists $\vv{\xi}_1$ and $\vv{\xi}_2$ by: 
\begin{equation}
    \vv{\xi}_1 \cdot \vv{\xi}_2 = \vv{\omega}_1^T\vv{v}_{O,2} + \vv{\omega}_2^T\vv{v}_{O,1}
\end{equation}
where $\vv{\xi}_1=\begin{bmatrix} \vv{\omega}_1 \\ \vv{v}_{O,1} \end{bmatrix}_O$ and $\vv{\xi}_2=\begin{bmatrix} \vv{\omega}_2 \\ \vv{v}_{O,2} \end{bmatrix}_O$ are expressed with respect to any common reference point $O$. Two twists are reciprocal if $\vv{\xi}_1 \cdot \vv{\xi}_2 = 0$. 

Any zero-pitch twist whose line (or instantaneous screw axis) passes through the axis of an R joint is reciprocal to the twist of that joint. Hence, for a U or S joint, the axis of the reciprocal twist passes through the centre of the joint. 

Let $\vv{\xi}_{JK}$ be a zero-pitch twist of line $\vv{k}-\vv{j}$ and amplitude $\| \vv{k}-\vv{j} \|$. This twist is reciprocal to the twists of R, U or S joints (if any) at points $J$ and $K$. 
Hence, $\vv{\xi}_{GE}$ is reciprocal to $\vv{\xi}_{G}$ and $\vv{\xi}_{E}$, $\vv{\xi}_{HF}$ is reciprocal to $\vv{\xi}_{H}$ and $\vv{\xi}_{F}$, and $\vv{\xi}_{DP}$ is reciprocal to $\vv{\xi}_{D}$. 
Therefore, from \eqref{eq:twist_ee}: 
\begin{equation} \label{eq:twist_ee_reciprocal}
    \begin{cases}
        \vv{\xi}_{DP} \cdot \vv{\xi}_{EE} = \vv{\xi}_{DP} \cdot \vv{\xi}_A \\
        \vv{\xi}_{GE} \cdot \vv{\xi}_{EE} = \vv{\xi}_{GE} \cdot \vv{\xi}_B \\
        \vv{\xi}_{HF} \cdot \vv{\xi}_{EE} = \vv{\xi}_{HF} \cdot \vv{\xi}_C
    \end{cases}
\end{equation}
which can be expanded as: 
\begin{equation} \label{eq:twist_reciprocal_expand}
    \begin{cases}
        (\vv{p}-\vv{d})^T\dvv{p} = \dot{\theta}_1 [\vv{x}_f \times (\vv{d}-\vv{a})]^T (\vv{p}-\vv{d}) \\
        (\vv{e}-\vv{g})^T\dvv{e} = \dot{\theta}_2 [\vv{x}_f \times (\vv{g}-\vv{b})]^T (\vv{e}-\vv{g}) \\
        (\vv{f}-\vv{h})^T\dvv{f} = \dot{\theta}_3 [\vv{x}_f \times (\vv{h}-\vv{c})]^T (\vv{f}-\vv{h})
    \end{cases}
\end{equation}

The angular velocity of the end-effector is noted $\vv{\omega}$, then we have $\dvv{e} = \dot{\theta}_1[\vv{x}_f \times (\vv{d}-\vv{a})] + \vv{\omega} \times \vv{Q}\vv{e}_b$ and $\dvv{f} = \dot{\theta}_1[\vv{x}_f \times (\vv{d}-\vv{a})] + \vv{\omega} \times \vv{Q}\vv{f}_b$. Equation \eqref{eq:twist_reciprocal_expand} then becomes: 
\begin{equation} \label{eq:dp_omega_dtheta}
    \begin{cases}
        (\vv{p}-\vv{d})^T\dvv{p} = \dot{\theta}_1 \vv{u}_1^T (\vv{d}-\vv{p}) \\
        [\vv{Q}\vv{e}_b \times (\vv{e} - \vv{g})]^T \vv{\omega} = -\dot{\theta_1} \vv{u}_1^T (\vv{e}-\vv{g}) + \dot{\theta}_2 \vv{u}_2^T (\vv{e}-\vv{g}) \\
        [\vv{Q}\vv{f}_b \times (\vv{f} - \vv{h})]^T \vv{\omega} = -\dot{\theta_1} \vv{u}_1^T (\vv{f}-\vv{h}) + \dot{\theta}_3 \vv{u}_3^T (\vv{f}-\vv{h}) \\
    \end{cases}
\end{equation}
with $\vv{u}_1=\vv{x}_f \times (\vv{d}-\vv{a})$, $\vv{u}_2=\vv{x}_f \times (\vv{g}-\vv{b})$ and $\vv{u}_3=\vv{x}_f \times (\vv{h}-\vv{c})$.

It is possible to express $\vv{\omega}$ as a function of $\dvv{p}$ through the joints velocities $\dvv{\theta}_s = [ \dot{\theta}_1 \; \dot{\theta}_{s2} \; \dot{\theta}_{s3} ]^T$: 
\begin{equation}
    \begin{cases}
        \vv{J}_\omega \dvv{\theta}_s=\vv{\omega} \\
        \vv{J}_p \dvv{\theta}_s=\dvv{p}
    \end{cases}
\end{equation}
with
\begin{equation}
    \vv{J}_\omega = 
    \begin{bmatrix}
        \vv{x}_f & \vv{x}_f & \vv{Q}\vv{z}_f
    \end{bmatrix}
\end{equation}
and
\begin{equation}
    \vv{J}_p = 
    \begin{bmatrix}
        \vv{x}_f\times(\vv{p}-\vv{a}) & \vv{x}_f\times(\vv{p}-\vv{d}) & \vv{Q}\vv{z}_f\times(\vv{p}-\vv{d})
    \end{bmatrix}
\end{equation}
Matrix $\vv{J}_p$ is singular only if $\vv{x}_f$, $\vv{p}-\vv{a}$ and $\vv{p}-\vv{d}$ are coplanar, which corresponds to a type I singularity. Consequently, $\vv{J}_p$ is invertible in the interior of the reachable workspace and: 
\begin{equation} \label{eq:omega_dp}
    \vv{\omega} = \vv{J}_\omega \vv{J}_p^{-1}\dvv{p}
\end{equation}

Then, combining \eqref{eq:dp_omega_dtheta} and \eqref{eq:omega_dp} yields: 
\begin{equation}
    \vv{J}\dvv{p} = \vv{K}\dvv{\theta}
\end{equation}
with
\begin{equation}
    \vv{J} = 
    \begin{bmatrix}
        (\vv{p}-\vv{d})^T \\
        [\vv{Q}\vv{e}_p \times (\vv{e}-\vv{g})]^T \vv{J}_\omega \vv{J}_p^{-1} \\
        [\vv{Q}\vv{f}_p \times (\vv{f}-\vv{h})]^T \vv{J}_\omega \vv{J}_p^{-1}
    \end{bmatrix}
\end{equation}
and
\begin{equation}
    \vv{K} = 
    \begin{bmatrix}
        (\vv{p}-\vv{d})^T\vv{u}_1 & 0 & 0 \\
        -(\vv{e}-\vv{g})^T\vv{u}_1 & (\vv{e}-\vv{g})^T\vv{u}_2 & 0 \\
        -(\vv{f}-\vv{h})^T\vv{u}_1 & 0 & (\vv{f}-\vv{h})^T\vv{u}_3
    \end{bmatrix}
\end{equation}

Note that an equivalent formulation can be obtained by taking the time derivative of the inverse kinematics. 

\section{Singularity Analysis} \label{sec:singularity_analysis}

The orientation capabilities of parallel robots are limited by type II (or parallel) singularities than occur within the reachable workspace. Therefore, one can expect the kinematically redundant robots to have a higher orientational workspace by avoiding singularities. 

The singularities of this architecture can be separated in two categories: those of the leg mechanism and those of the platform. This section discusses type I and type II singularities of the leg mechanism. The singularities of the platform have already been studied by Wen et al. \cite{Wen2021}. 

\subsection{Type I Singularities}

Type I singularities occur when a nonzero input velocity $\dvv{\theta}$ produces a zero output velocity $\dvv{p}$, or, equivalently, if $\det(\vv{K})=0$ \cite{Gosselin1990}. These singularities can be identified immediately from the choice of the reciprocal twists in \eqref{eq:twist_ee_reciprocal}. Indeed, the leg is in a type I singularity if (i) $\vv{\xi}_{PD}$ is reciprocal to $\vv{\xi}_{A}$ or (ii) $\vv{\xi}_{GE}$ is reciprocal to $\vv{\xi}_{B}$ or (iii) $\vv{\xi}_{HF}$ is reciprocal to $\vv{\xi}_{C}$. In other words, type I singular configurations are the ones in which the line of these reciprocal twists passes through the axis of the actuated revolute joints. It is straightforward to verify that this condition is equivalent to having a zero value of one of the diagonal entries of matrix $\vv{K}$. 

\subsection{Type II Singularities}

Type II singularities are configurations in which the Jacobian matrix $\vv{J}$ is non-invertible \cite{Gosselin1990}. They correspond to situations in which the end-effector can locally move even with the actuators locked. In this case, when the actuators are locked, the end-effector body $DEFP$ can only perform rotations around point $D$. The U joint at $D$ prevents rotations around the $\vv{x}_f \times \vv{Q}\vv{z}_f$ axis. Links $GE$ and $HF$ prevent rotations respectively around the $\vv{Q}\vv{e}_b \times (\vv{e}-\vv{g})$ axis and the $\vv{Q}\vv{f}_b \times (\vv{f}-\vv{h})$ axis. If these three axes have a linear dependency, then the leg is in a type II singularity. Hence, the following proposition. 

\begin{proposition} \label{prop:jac_singularity}
    If the end-effector of the leg $P$ is in the interior of the reachable workspace (i.e. no type I singularity), then
    \begin{equation}
        \det(\tilde{\vv{J}}) = 0 \Leftrightarrow \det(\vv{J}) = 0
    \end{equation}
    with
    \begin{equation}
        \tilde{\vv{J}} = \begin{bmatrix}
            (\vv{x}_f \times \vv{Q}\vv{z}_f)^T \\
            [\vv{Q}\vv{e}_b \times (\vv{e}-\vv{g})]^T \\
            [\vv{Q}\vv{f}_b \times (\vv{f}-\vv{h})]^T
        \end{bmatrix}
    \end{equation}
\end{proposition}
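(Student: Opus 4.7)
The plan is to make rigorous the physical argument sketched just above the proposition. Since the end-effector lies in the interior of the reachable workspace, $\vv{K}$ is invertible, and the relation $\vv{J}\dvv{p}=\vv{K}\dvv{\theta}$ makes $\det(\vv{J})=0$ equivalent to the existence of a nonzero $\dvv{p}$ produced with locked actuators, i.e.\ with $\dvv{\theta}=\vv{0}$. With $\dot{\theta}_1=0$ the pivot $D$ is fixed, so the rigid body $DEFP$ rotates about $D$ with some angular velocity $\vv{\omega}$, giving $\dvv{p}=\vv{\omega}\times(\vv{p}-\vv{d})$, $\dvv{e}=\vv{\omega}\times\vv{Q}\vv{e}_b$ and $\dvv{f}=\vv{\omega}\times\vv{Q}\vv{f}_b$.

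Next I would translate the three constraints on the rotating body directly into the rows of $\tilde{\vv{J}}\vv{\omega}=\vv{0}$. The U joint at $D$ confines $\vv{\omega}$ to $\mathrm{span}(\vv{x}_f,\vv{Q}\vv{z}_f)$, which is already visible in $\vv{J}_\omega$ and is exactly $(\vv{x}_f\times\vv{Q}\vv{z}_f)^T\vv{\omega}=0$. The constant lengths of $GE$ and $HF$ give $(\vv{e}-\vv{g})^T\dvv{e}=0$ and $(\vv{f}-\vv{h})^T\dvv{f}=0$; substituting the rotation formulas above and using the cyclic identity $\vv{a}\cdot(\vv{b}\times\vv{c})=\vv{c}\cdot(\vv{a}\times\vv{b})$ produces rows 2 and 3 of $\tilde{\vv{J}}$. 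Equivalently, the last two rows of $\vv{J}$ have the form $[\cdot]^T\vv{J}_\omega\vv{J}_p^{-1}\dvv{p}=[\cdot]^T\vv{\omega}$, so $\vv{J}\dvv{p}=\vv{0}$ already encodes those two link conditions on the induced $\vv{\omega}$.

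The forward direction is then immediate: for a nonzero $\dvv{p}\in\ker\vv{J}$, the associated $\vv{\omega}=\vv{J}_\omega\vv{J}_p^{-1}\dvv{p}$ satisfies $\tilde{\vv{J}}\vv{\omega}=\vv{0}$, and $\vv{\omega}\neq\vv{0}$ because $\dvv{p}=\vv{\omega}\times(\vv{p}-\vv{d})\neq\vv{0}$. For the converse I would pick a nonzero $\vv{\omega}_0\in\ker\tilde{\vv{J}}$, use the first row of $\tilde{\vv{J}}$ to confirm kinematic admissibility, and take $\dvv{p}=\vv{\omega}_0\times(\vv{p}-\vv{d})\in\ker\vv{J}$. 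The main obstacle is the edge case $\vv{\omega}_0\parallel(\vv{p}-\vv{d})$, in which this candidate $\dvv{p}$ would vanish and the implication could fail; here I would argue that, since $\vv{p}-\vv{d}=d_p\vv{Q}\vv{x}_f$ and $\vv{\omega}_0\in\mathrm{span}(\vv{x}_f,\vv{Q}\vv{z}_f)$, such parallelism would force $\vv{Q}\vv{x}_f\in\mathrm{span}(\vv{x}_f,\vv{Q}\vv{z}_f)$. The explicit expression for $\vv{Q}$ shows that this reduces to $\cos\theta_{s3}=0$, which aligns $\vv{p}-\vv{d}$ with $\vv{x}_f$ and makes $\{\vv{x}_f,\vv{p}-\vv{a},\vv{p}-\vv{d}\}$ coplanar, i.e., a type I singularity ruled out by hypothesis. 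This closes the argument.
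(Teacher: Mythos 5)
Your proposal is correct, but it proves the equivalence by a genuinely different route than the paper. The paper's proof is a direct determinant manipulation: for sufficiency it uses the identity $(\vv{x}_f \times \vv{Q}\vv{z}_f)^T\vv{J}_\omega = \vv{0}$ to transfer a linear dependence among the rows of $\tilde{\vv{J}}$ to a collinearity of the last two rows of $\vv{J}$; for necessity it right-multiplies $\vv{J}$ by $\vv{J}_p$, notes that $(\vv{p}-\vv{d})^T\vv{J}_p = (\vv{p}-\vv{d})^T\vv{u}_1\,[1\;0\;0]$ with $(\vv{p}-\vv{d})^T\vv{u}_1 \neq 0$, expands along that row, and reads off from the resulting $2\times 2$ determinant that $\lambda[\vv{Q}\vv{e}_b\times(\vv{e}-\vv{g})]-\mu[\vv{Q}\vv{f}_b\times(\vv{f}-\vv{h})]$ must be collinear with $\vv{x}_f\times\vv{Q}\vv{z}_f$. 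You instead formalize the kinematic argument that the paper only sketches informally before the proposition: you identify $\ker\vv{J}$ with end-effector velocities achievable under locked actuators (using invertibility of $\vv{K}$ away from type I singularities) and $\ker\tilde{\vv{J}}$ with admissible instantaneous rotations of the body $DEFP$ about $D$, and you exhibit the correspondence $\dvv{p}=\vv{\omega}\times(\vv{p}-\vv{d})$ between the two kernels. The substance is equivalent — both arguments ultimately rest on $\vv{J}_p$ being invertible, $(\vv{p}-\vv{d})^T\vv{u}_1\neq 0$, and $\vv{x}_f\perp\vv{Q}\vv{z}_f$ — but your version makes the geometric meaning of $\tilde{\vv{J}}$ explicit and, importantly, isolates the one genuine degeneracy ($\vv{\omega}_0$ parallel to $\vv{p}-\vv{d}$, i.e.\ $\cos\theta_{s3}=0$) and correctly dismisses it as a type I singularity excluded by hypothesis; the paper only makes that observation in the discussion following the proposition, not inside the proof itself. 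The paper's algebraic route is shorter and avoids having to argue that every element of $\ker\vv{J}$ corresponds to a feasible locked-actuator motion, a step you handle correctly but which deserves the explicit justification you give via $\vv{K}^{-1}$.
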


\begin{proof}
    \underline{Sufficient condition:} This case is already proven by Zhou and Gosselin \cite{Zhou2022}. Suppose that $\det(\tilde{\vv{J}}) = 0$. If $[\vv{Q}\vv{e}_b \times (\vv{e}-\vv{g})]^T$ and $[\vv{Q}\vv{f}_b \times (\vv{f}-\vv{h})]^T$ are collinear, then $\det(\vv{J}) = 0$. Otherwise, there exists a linear dependency between the rows of matrix $\tilde{\vv{J}}$. More precisely, there exist two scalars $\lambda$ and $\mu$, non both zero, such that either $[\vv{Q}\vv{e}_b \times (\vv{e}-\vv{g})]^T = \lambda [\vv{Q}\vv{f}_b \times (\vv{f}-\vv{h})]^T + \mu (\vv{x}_f \times \vv{Q}\vv{z}_f)^T$ or $[\vv{Q}\vv{f}_b \times (\vv{f}-\vv{h})]^T = \lambda [\vv{Q}\vv{e}_b \times (\vv{e}-\vv{g})]^T + \mu (\vv{x}_f \times \vv{Q}\vv{z}_f)^T$. Since $[\vv{x}_f \times \vv{Q}\vv{z}_f]^T \vv{J}_\omega = \vv{0}$, the last two rows of matrix $\vv{J}$ are collinear. Hence, $\det(\vv{J}) = 0$. \\
    \underline{Necessary condition:} Suppose that $\det(\vv{J}) = 0$. Since $\vv{J}_p$ is invertible, 
    \begin{equation}
        \begin{vmatrix}
            (\vv{p}-\vv{d})^T \vv{J}_p \\
            [\vv{Q}\vv{e}_b \times (\vv{e}-\vv{g})]^T \vv{J}_\omega \\
            [\vv{Q}\vv{f}_b \times (\vv{f}-\vv{h})]^T \vv{J}_\omega
        \end{vmatrix} = 0
    \end{equation}
    Since $(\vv{p}-\vv{d})^T \vv{J}_p = (\vv{p}-\vv{d})^T \vv{u}_1 [ \; 1 \; 0 \; 0 \; ]$ and $(\vv{p}-\vv{d})^T \vv{u}_1 \neq 0$ (no type I singularity), then
    \begin{equation}
        \begin{vmatrix}
            \vv{x}_f^T [\vv{Q}\vv{e}_b \times (\vv{e}-\vv{g})] & (\vv{Q}\vv{z}_f)^T [\vv{Q}\vv{e}_b \times (\vv{e}-\vv{g})] \\
            \vv{x}_f^T [\vv{Q}\vv{f}_b \times (\vv{f}-\vv{h})] & (\vv{Q}\vv{z}_f)^T [\vv{Q}\vv{f}_b \times (\vv{f}-\vv{h})]
        \end{vmatrix} = 0
    \end{equation}
    As a consequence, there exist two scalars $\lambda$ and $\mu$, non both zero, such that 
    \begin{equation}
        \lambda 
        \begin{bmatrix}
            \vv{x}_f^T [\vv{Q}\vv{e}_b \times (\vv{e}-\vv{g})] \\
            \vv{z}_f^T\vv{Q}^T [\vv{Q}\vv{e}_b \times (\vv{e}-\vv{g})]
        \end{bmatrix}
        = \mu
        \begin{bmatrix}
            \vv{x}_f^T [\vv{Q}\vv{f}_b \times (\vv{f}-\vv{h})] \\
            \vv{z}_f^T\vv{Q}^T [\vv{Q}\vv{f}_b \times (\vv{f}-\vv{h})]
        \end{bmatrix}
    \end{equation}
    Vectors $\vv{x}_f$, $\vv{Q}\vv{z}_f$ and $\vv{x}_f \times \vv{Q}\vv{z}_f$ form a basis of $\mathbb{R}^3$, therefore $\lambda [\vv{Q}\vv{e}_b \times (\vv{e}-\vv{g})] - \mu [\vv{Q}\vv{f}_b \times (\vv{f}-\vv{h})]$ and $\vv{x}_f \times \vv{Q}\vv{z}_f$ are collinear. Hence, $\det(\tilde{\vv{J}}) = 0$. 
\end{proof}

With no assumptions, interpreting geometrically all the type II singularities of the leg mechanism may require a large number of special cases to discuss. Therefore, a simplifying assumption is introduced that allows for an exhaustive analysis of the singularities. 

Suppose that the spherical joint centres $E$ and $F$ are coincident (i.e. $e_y=f_y=0$). 
From Proposition \ref{prop:jac_singularity}, the leg mechanism is not in a type II singularity if and only if
\begin{equation}
    \dim ( \text{span}\{ \vv{x}_f \times \vv{Q}\vv{z}_f, \vv{Q}\vv{e}_b \times (\vv{e}-\vv{g}), \vv{Q}\vv{f}_b \times (\vv{f}-\vv{h}) \} ) = 3
\end{equation}

Since $\vv{Q}\vv{e}_b \perp \text{span}(\{ \vv{Q}\vv{e}_b \times (\vv{e}-\vv{g}), \vv{Q}\vv{f}_b \times (\vv{f}-\vv{h}) \})$, the condition is verified if and only if $\vv{Q}\vv{e}_b\times(\vv{e}-\vv{g})$ and $\vv{Q}\vv{e}_b\times(\vv{e}-\vv{h})$ are linearly independent and
\begin{equation}
    (\vv{x}_f \times \vv{Q}\vv{z}_f)^T\vv{Q}\vv{e}_b \neq 0 \Leftrightarrow {\vv{x}_f}^T\vv{Q}\vv{y}_f \neq 0 \Leftrightarrow \theta_{s3} \neq \frac{\pi}{2} + k\pi
\end{equation}
This second condition only happens if $\vv{p}-\vv{d}$ and $\vv{x}_f$ are collinear and so corresponds to a type I singularity. Therefore, within the interior of its reachable workspace, the leg mechanism is in a type II singularity if and only if $\vv{Q}\vv{e}_b\times(\vv{e}-\vv{g})$ and $\vv{Q}\vv{e}_b\times(\vv{e}-\vv{h})$ are collinear, or, equivalently $\vv{e}-\vv{g}$, $\vv{e}-\vv{h}$ and $\vv{Q}\vv{e}_b$ are coplanar. 

By using remote centre of motion mechanisms (e.g., \cite{Zong2008}), it is possible to have the centres of joints $E$ and $F$ coincident. However, this solution is not considered in this work because of its mechanical complexity. The reader may refer to the Tetrobot to see a practical example in which multiple spherical joint centres are coincident \cite{Hamlin1997}. 

\section{Optimal Design} \label{sec:optimal_design}

\subsection{Considerations}

In order to use the robot for intuitive physical human robot interactions, it is prescribed that the end-effector of the legs $P$ should be able to undergo an acceleration of magnitude $2g$ in any direction, where $g$ stands for the gravitational acceleration. Therefore, it is desired to find the values of the geometric parameters that maximize the workspace in which the end-effector is able to undergo the desired accelerations with a maximum actuator torque of $\SI{10}{Nm}$. 

As discussed in Section \ref{sec:robot_architecture}, the workspace of the leg has a circular symmetry around the axis of the actuated joints. 
Therefore, it is possible to consider a half-plane that contains the axis of the actuated joints as the workspace. 
For a given value of $\norm{\vv{p}_b}$, the reachable workspace of the leg is maximized if $l_1=\norm{\vv{p}_b}$. In this case, the workspace is represented in the half-plane as a disk with centre $(0,l_1)$ and radius $l_1$. The other geometric parameters can always be chosen such that the whole disk is reachable. 

The base of the $(6+3)$-DoF robot is mounted horizontally, either in a ceiling or floor configuration. As a consequence, the gravity vector is always perpendicular to the $\vv{x}_f$ vector. For calculation purposes, the leg is considered as an $m=\SI{0.5}{kg}$ point mass located at the end-effector $P$. This value corresponds approximately to the mass of the moving parts obtained with a preliminary CAD model, and, therefore, is a conservative estimate since some moving masses are closer to the actuated joints axes. For a point in the disk, to ensure that gravity can be statically compensated, the actuators must be able to generate at the end-effector any force $\vv{f}_g \in \mathbb{F}_g$ with $\mathbb{F}_g = \{mg \; [0 \; \cos \alpha \; \sin \alpha]^T, \; 0 \leq \alpha < 2\pi\}$. Consequently, producing a $2g$ acceleration at the end-effector $P$ requires a force $\vv{f} \in \mathbb{F}$, with $\mathbb{F} = \mathbb{F}_g + 2mg\;\mathbb{B}^3 = \{ \vv{u} + 2mg \; \vv{v}, \, \vv{u} \in \mathbb{F}_g, \; \vv{v} \in \mathbb{B}^3 \}$ where $\mathbb{B}^3$ is the 3D unit ball. 
This set does not correspond to an ellipsoid, and, in particular, it is not convex. 
Indeed, consider $\vv{f}_g = mg \; [0 \; \cos \alpha \; \sin \alpha]^T$. Generating a force $\vv{f}\in\mathbb{F}$ purely along the $\vv{x}_f$ axis, i.e. in the form $\vv{f} = [f \; 0 \; 0]^T$, requires an inertial force $\vv{f}_a = \vv{f} - \vv{f}_g = [f \; -mg\cos \alpha \; -mg\sin \alpha]^T$. 
Since $\| \vv{f}_a \| \leq 2mg$, then $| f | \leq \sqrt{3mg}$. Yet, the component along the $\vv{x}_f$ axis of some forces in $\mathbb{F}$, such as $\vv{f} = [2mg \; mg\cos \alpha \; mg\sin \alpha]^T$, are larger than $\sqrt{3mg}$. Hence, $\mathbb{F}$ is not convex. 

The legs are in the assembly configuration illustrated in Figs. \ref{fig:robot_architecture}, \ref{fig:leg_parametrization}, \ref{fig:cad_9dof}. 
In order to have a symmetric leg, the following constraints are considered: $l_2=l_3$, $l_4=l_5$, $l_6=l_7$, $e_x=f_x$, $e_y=-f_y$ and $e_z=f_z$. 
As explained in Section \ref{sec:singularity_analysis}, the value $e_y$ is chosen as small as possible to avoid type II singularities. This also reduces the size of the leg end-effector body and thus prevents interference with other legs. For the same reason, we also choose $e_z=0$. Lastly, to minimize the footprint of the $(6+3)$-DoF robot without causing interference between legs, we restrict $l_6$ to the minimum mechanically feasible value, which is $l_6=\SI{0.130}{m}$ (obtained from the CAD model). 
Therefore, the optimization problem has three variables: $l_2$, $l_4$, $e_x$. 

The leg collides with the fixed platform if $\max(\vv{\theta}) > 270\degree$. The optimization problem is only solved in the horizontal half-plane $z = 0$ and $y \geq 0$, and we limit $\max(\vv{\theta})$ to $135\degree$.

\subsection{Optimal Solution}

The radius of the reachable workspace disk is set to be $l_1 = \SI{0.35}{m}$ to allow for a large workspace of the leg, and hence of the $(6+3)$-DoF robot. The optimization is performed by discretizing all the variables and the planar workspace. The variables are first discretized with a $\SI{1}{cm}$ step and a second discretization is performed with a $\SI{1}{mm}$ step in the neighbourhood of the best solution. The length of the links are limited to $\SI{0.35}{m}$ to enable a high stiffness of the robot without increasing significantly the mass. A solution is considered only if the RUS chains do not limit the reachable workspace of the RU chain. 
The optimal values of the variables are obtained as follows: 
\begin{equation*}
    l_2 = \SI{0.139}{m}, l_4 = \SI{0.350}{m}, e_x = \SI{0.097}{m}
\end{equation*}

Figure \ref{fig:det_JK} shows the values of the determinants of matrices $\vv{J}$ and $\vv{K}$. It can be seen that there are type II singularities in the interior of the reachable workspace, but they are close to the boundaries. 

\begin{figure}[htbp]
    \centering
    \subfloat[Determinant of $\vv{J}$.]{%
        \includegraphics[width=0.45\linewidth]{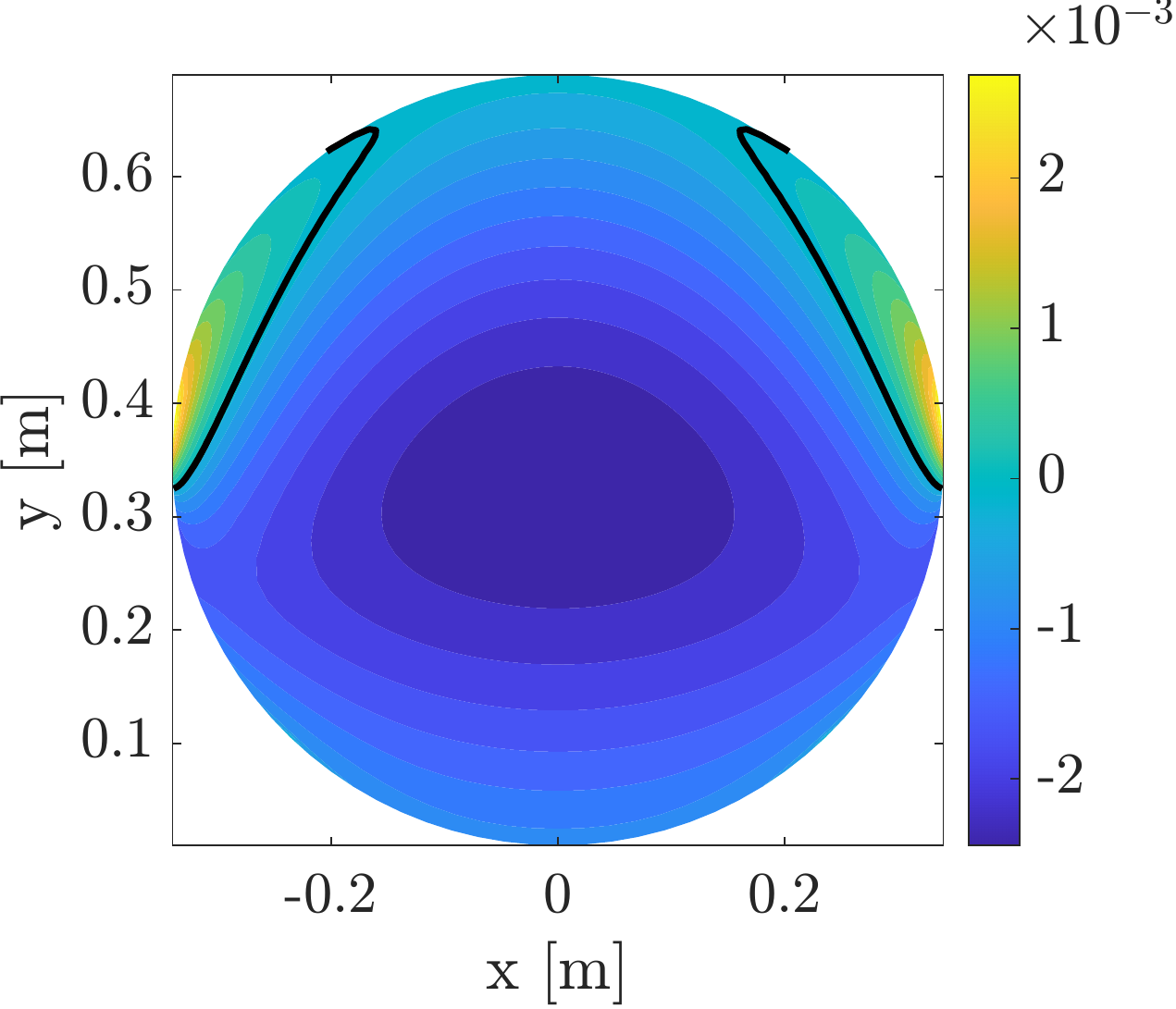}}
    \hfill
    \subfloat[Determinant of $\vv{K}$.]{%
        \includegraphics[width=0.45\linewidth]{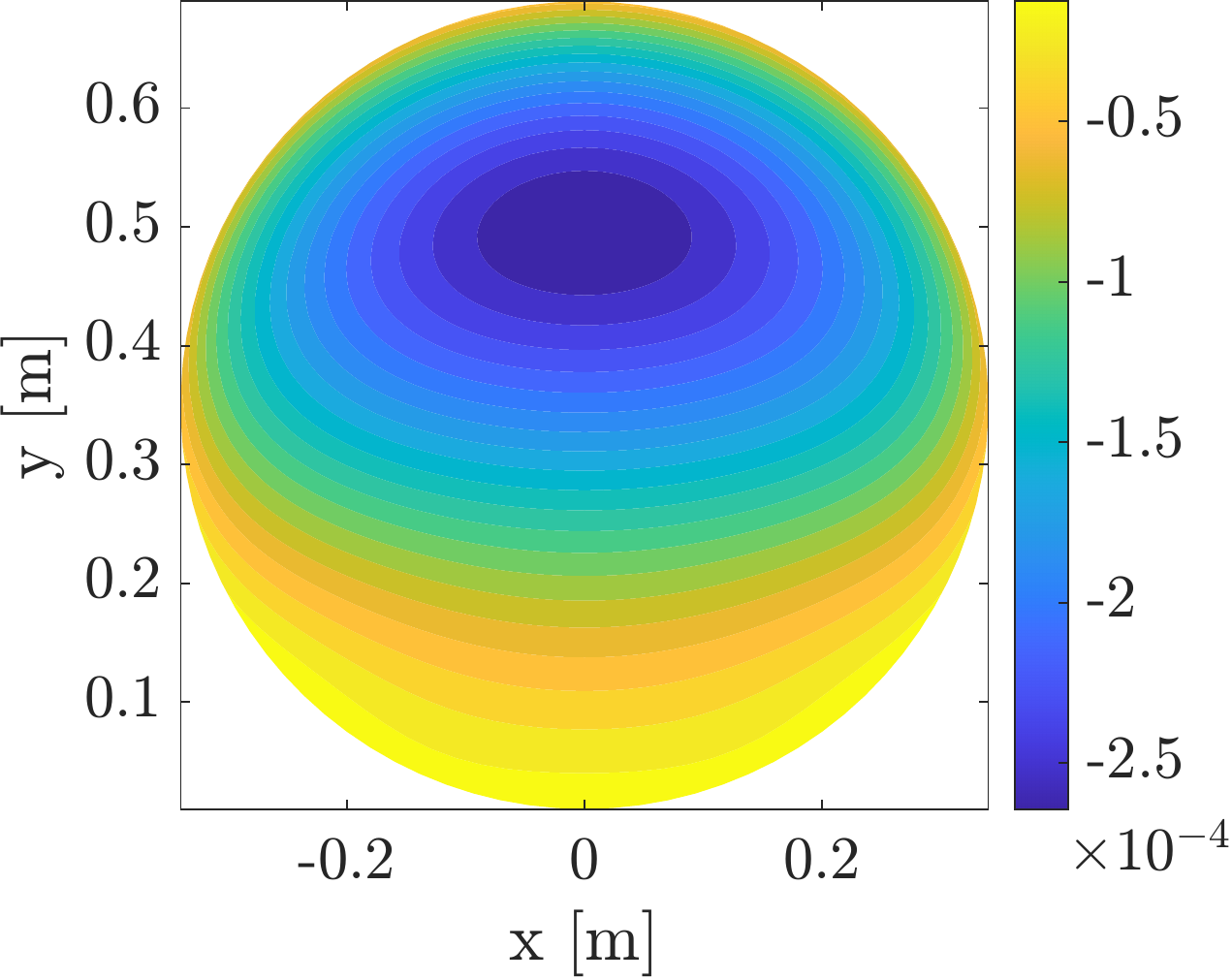}}
    \caption{Determinant of Jacobian matrices. The black lines show type II singularities. }
    \label{fig:det_JK} 
\end{figure}

Figure \ref{fig:WFW_optimal} shows the maximum actuator torque $\tau_{ext}$ that is necessary to generate the required forces $\mathbb{F}_g + 2mg\;\mathbb{B}^3$ with the optimal solution. For better readability, $\tau_{ext}$ is saturated at $\SI{11}{Nm}$. 

\begin{figure}[htbp]
    \centering
    \includegraphics[width=0.30\textwidth]{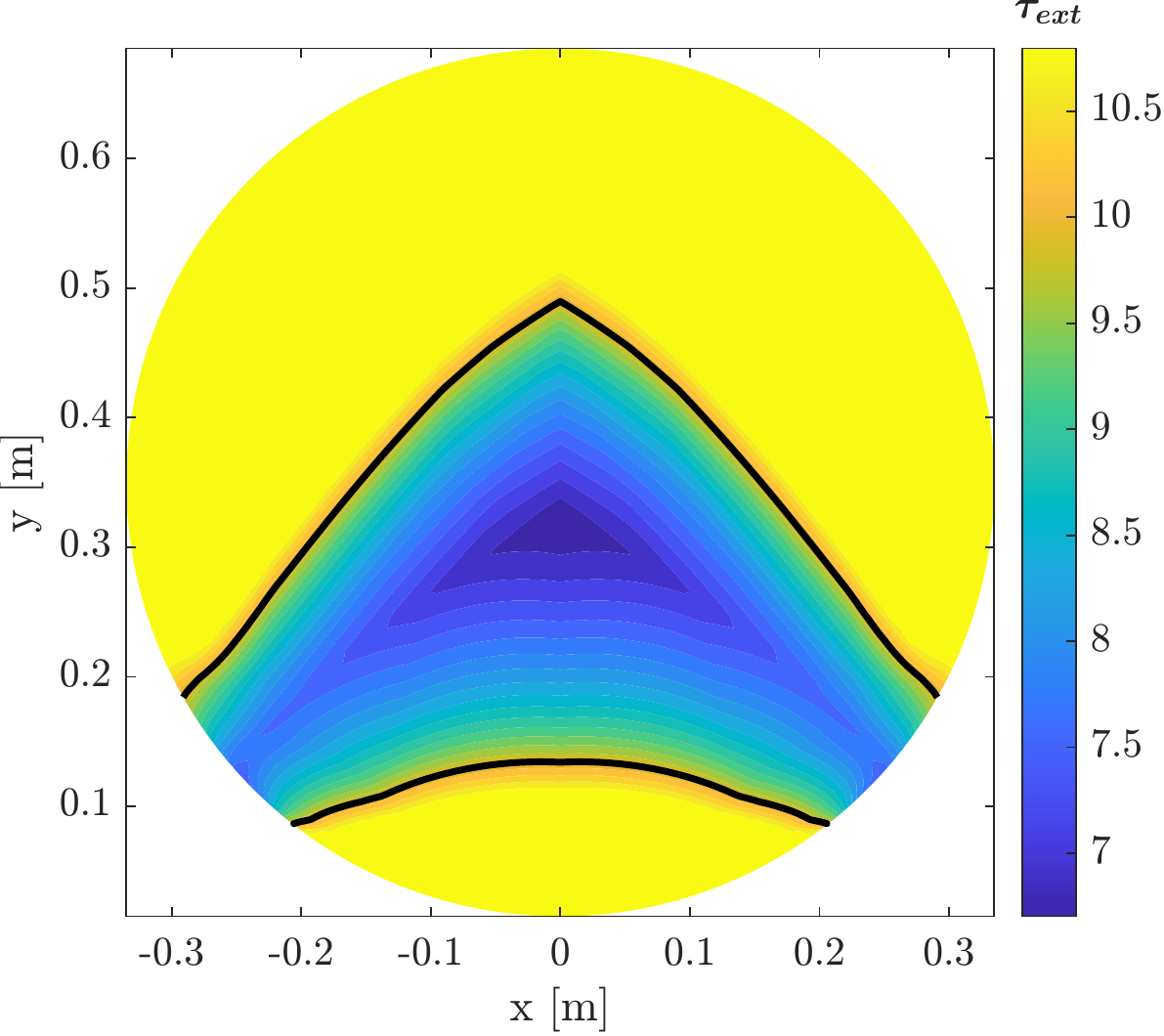}
    \caption{Optimal workspace. Black lines correspond to $\tau_{ext}=\SI{10}{Nm}$. }
    \label{fig:WFW_optimal}
\end{figure}

The singularity-free COW of the (6+3)-DoF robot is illustrated in Fig. \ref{fig:cow_9dof} with the following considerations. The legs are positioned symmetrically (as in Fig. \ref{fig:robot_architecture}), with $\| \vv{a}_j-\vv{a}_i \|=\SI{0.143}{m}$ ($i$ and $j$ refer to the leg number). The end-effector reconfigurable platform is considered a horizontal equilateral triangle of side length $\SI{0.173}{m}$. With these considerations, the ratio between the area of the COW projected on a horizontal plane and the footprint (defined as the area of the convex hull of actuated joint positions) is $5.03$. 

\begin{figure}[htbp]
    \centering
    \includegraphics[trim=0px 105px 0px 105px,clip=true,width=0.30\textwidth]{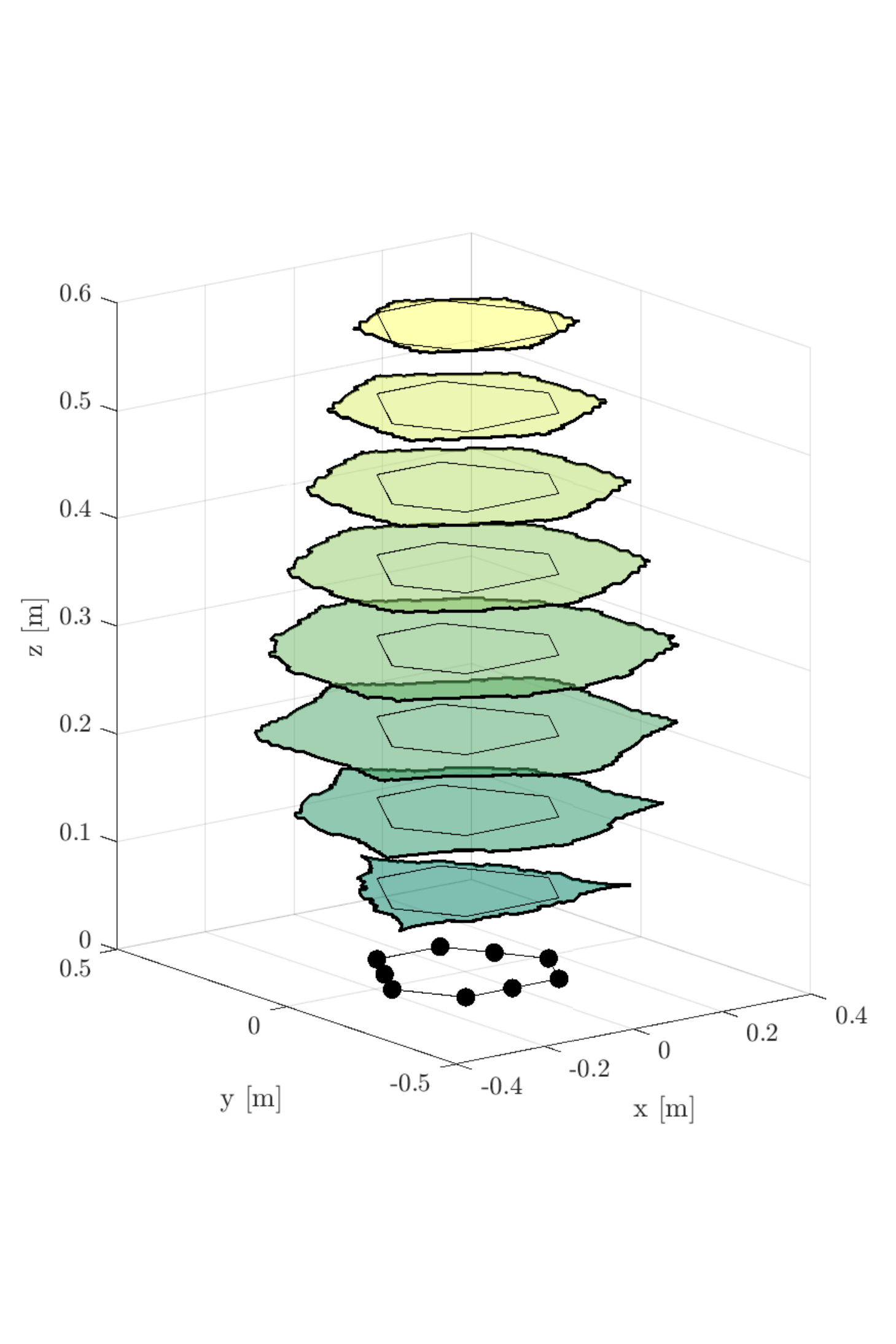}
    \caption{Constant orientation workspace of the $(6+3)$-DoF robot. The polygon that is drawn on each layer is the convex hull of the positions of actuated joints (black dots). }
    \label{fig:cow_9dof}
\end{figure}

\section{Prototyping} \label{sec:prototyping}

A prototype of the proposed $(6+3)$-DoF robot is under construction. Figure \ref{fig:cad_9dof} shows a 3D model obtained with a CAD software. 

\begin{figure}[htbp]
    \centering
    \includegraphics[width=0.35\textwidth]{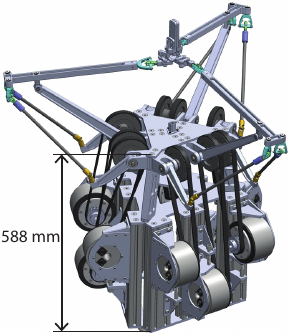}
    \caption{CAD model of the (6+3) robot.}
    \label{fig:cad_9dof}
\end{figure}

The prototype uses nine identical SGMCS-05B3B-YAX1 direct-drive motors with $\SI{5}{Nm}$ nominal torque and nine SGD7S-2R8FA0A drives from Yaskawa. Timing belt transmissions allow for reducing the footprint of the robot and multiplying the available torque by $72/34\approx2.12$. 
The actuators are equipped with $20$-bit encoders and, therefore, the resolution of the end-effector of the leg $P$ is $\SI{8.4e-6}{m}$ in the worst case scenario. 

The limited mechanical stress on the distal links of the RUS chains allows for using lightweight carbon fibre tubes. The remaining parts are made of aluminum. 

The range of motion of the spherical joints is critical to benefit from the large workspace of the leg. Since the motion of traditional spherical joints is limited by mechanical interference, the 4-DoF spherical joints introduced in \cite{Schreiber2017} are used and adapted to meet the mechanical stress requirements of the prototype. These joints can generate a very large range of motion that exceeds $\pm 150 \degree$. The video material accompanying this paper shows an animation of the model constructed with the CAD software performing rotations and translations with large ranges of motion (\url{https://www.youtube.com/watch?v=lBKiMFoyJ8o}).

\section{Conclusion and Future Work} \label{sec:conclusion}

This paper introduces a backdrivable and kinematically redundant $(6+3)$-DoF parallel robot for sensorless physical human-robot interaction. The robot has three identical legs and all actuators are fixed to the base. The leg mechanism is designed to maximize the workspace in which the end-effector of the leg can have a $2g$ acceleration in all directions. The preliminary design of the robot is shown using a CAD model. The design is selected to minimize the footprint of the $(6+3)$-DoF robot while allowing very large ranges of rotation and translation. 

The prototype is under construction. Current work includes the dynamic modelling of the robot, the development of control algorithms for physical human-robot interaction and the design of configurable platforms actuated by the redundant degrees of freedom.

\bibliographystyle{IEEEtran}
\bibliography{references}

\end{document}